\declaretheorem[name=Theorem,refname={Theorem,Theorems},Refname={Theorem,Theorems}]{theorem}
\declaretheorem[name=Lemma,refname={Lemma,Lemmas},Refname={Lemma,Lemmas},sibling=theorem]{lemma}
\newcommand{\cA}{\mathcal{A}}
\newcommand{\cN}{\mathcal{N}}
\newcommand{\E}[1]{\mathbb{E} \left[#1\right]}
\newcommand{\condE}[2]{\mathbb{E} \left[#1 \,\middle|\, #2\right]}
\newcommand{\prob}[1]{\mathbb{P} \left(#1\right)}
\newcommand{\condprob}[2]{\mathbb{P} \left(#1 \,\middle|\, #2\right)}
\newcommand{\var}[1]{\mathrm{var} \left[#1\right]}
\newcommand{\abs}[1]{\left|#1\right|}
\newcommand{\ceils}[1]{\left\lceil#1\right\rceil}
\newcommand{\floors}[1]{\left\lfloor#1\right\rfloor}
\newcommand{\I}[1]{\mathds{1} \! \left\{#1\right\}}
\newcommand{\set}[1]{\left\{#1\right\}}
\DeclareMathOperator*{\argmax}{arg\,max\,}
\mathchardef\mhyphen="2D
\newcommand{\agapev}{\ensuremath{\tt A\mhyphen GapE\mhyphen V}\xspace}
\newcommand{\alg}{\ensuremath{\tt Alg}\xspace}
\newcommand{\gape}{\ensuremath{\tt GapE}\xspace}
\newcommand{\gapev}{\ensuremath{\tt GapE\mhyphen V}\xspace}
\newcommand{\sh}{\ensuremath{\tt SH}\xspace}
\newcommand{\shadavar}{\ensuremath{\tt SHAdaVar}\xspace}
\newcommand{\shvar}{\ensuremath{\tt SHVar}\xspace}
\newcommand{\unif}{\ensuremath{\tt Unif}\xspace}
\newcommand{\vbr}{\ensuremath{\tt VBR}\xspace}
\title{Fixed-Budget Best-Arm Identification with Heterogeneous Reward Variances}
\author[ ]{{Anusha Lalitha}}
\author[ ]{Kousha Kalantari}
\author[ ]{Yifei Ma}
\author[ ]{Anoop Deoras}
\author[ ]{Branislav Kveton}
\affil[ ]{AWS AI Labs}
\affil[ ]{\texttt{\{anlalith,kkalant,yifeim,adeoras,bkveton\}@amazon.com}}
\begin{document}

\maketitle

\begin{abstract}
We study the problem of best-arm identification (BAI) in the fixed-budget setting with heterogeneous reward variances. We propose two variance-adaptive BAI algorithms for this setting: \shvar for known reward variances and \shadavar for unknown reward variances. The key idea in our algorithms is to adaptively allocate more budget to arms with higher reward variances. The main algorithmic novelty is in the design of \shadavar, which allocates budget greedily based on overestimating unknown reward variances. We bound the probabilities of misidentifying best arms in both \shvar and \shadavar. Our analyses rely on novel lower bounds on the number of arm pulls in BAI that do not require closed-form solutions to the budget allocation problem. One of our budget allocation problems is equivalent to the optimal experiment design with unknown variances and thus of a broad interest. We also evaluate our algorithms on synthetic and real-world problems. In most settings, \shvar and \shadavar outperform all prior algorithms.
\end{abstract}

\section{Introduction}
\label{sec:introduction}

The problem of \emph{best-arm identification (BAI)} in the \emph{fixed-budget} setting is a \emph{pure exploration} bandit problem which can be briefly described as follows. An agent interacts with a stochastic multi-armed bandit with $K$ arms and its goal is to identify the arm with the highest mean reward within a fixed budget $n$ of arm pulls \citep{bubeck09pure,audibert10best}. This problem arises naturally in many applications in practice, such as online advertising, recommender systems, and vaccine tests \citep{lattimore19bandit}. It is also common in applications where observations are costly, such as Bayesian optimization \citep{krause08nearoptimal}. Another commonly studied setting is \emph{fixed-confidence} BAI \citep{evendar06action,soare14bestarm}. Here the goal is to identify the best arm within a prescribed confidence level while minimizing the budget. Some works also studied both settings \citep{gabillon12best,karnin13almost,kaufmann16complexity}.

Our work can be motivated by the following example. Consider an A/B test where the goal is to identify a movie with the highest average user rating from a set of $K$ movies. This problem can be formulated as BAI by treating the movies as arms and user ratings as stochastic rewards. Some movies get either unanimously good or bad ratings, and thus their ratings have a low variance. Others get a wide range of ratings, because they are rated highly by their target audience and poorly by others; and hence their ratings have a high variance. For this setting, we can design better BAI policies that take the variance into account. Specifically, movies with low-variance ratings can be exposed to fewer users in the A/B test than movies with high-variance ratings.

An analogous synthetic example is presented in \cref{fig:synthetic}. In this example, reward variances increase with mean arm rewards for a half of the arms, while the remaining arms have very low variances. The knowledge of the reward variances can be obviously used to reduce the number of pulls of arms with low-variance rewards. However, in practice, the reward variances are rarely known in advance, such as in our motivating A/B testing example, and this makes the design and analysis of variance-adaptive BAI algorithms challenging. We revisit these two examples in our empirical studies in \cref{sec:experiments}.

We propose and analyze two variance-adaptive BAI algorithms: \shvar and \shadavar. \shvar assumes that the reward variances are known and is a stepping stone for our fully-adaptive BAI algorithm \shadavar, which estimates them. \shadavar utilizes high-probability upper confidence bounds on the reward variances. Both algorithms are motivated by sequential halving (\sh) of \citet{karnin13almost}, a near-optimal solution for fixed-budget BAI with homogeneous reward variances.

Our main contributions are:
\begin{itemize}
  \item We design two variance-adaptive algorithms for fixed-budget BAI: \shvar for known reward variances and \shadavar for unknown reward variances. \shadavar is only a third algorithm for this setting \citep{gabillon11multibandit,faella20rapidly} and only a second that can be implemented as analyzed \citep{faella20rapidly}. The key idea in \shadavar is to solve a budget allocation problem with unknown reward variances by a greedy algorithm that overestimates them. This idea can be applied to other elimination algorithms in the cumulative regret setting \citep{auer10ucb} and is of independent interest to the field of optimal experiment design \citep{pukelsheim93optimal}.
  \item We prove upper bounds on the probability of misidentifying the best arm for both \shvar and \shadavar. The analysis of \shvar extends that of \citet{karnin13almost} to heterogeneous variances. The analysis of \shadavar relies on a novel lower bound on the number of pulls of an arm that scales linearly with its unknown reward variance. This permits an analysis of sequential halving without requiring a closed form for the number of pulls of each arm.
  \item We evaluate our methods empirically on Gaussian bandits and the MovieLens dataset \citep{movielens}. In most settings, \shvar and \shadavar outperform all prior algorithms.
\end{itemize}

The paper is organized as follows. In \cref{sec:setting}, we present the fixed-budget BAI problem. We present our algorithms in \cref{sec:algorithms} and analyze them in \cref{sec:analysis}. The algorithms are empirically evaluated in \cref{sec:experiments}. We review prior works in \cref{sec:related work} and conclude in \cref{sec:conclusions}.

\section{Setting}
\label{sec:setting}

We use the following notation. Random variables are capitalized, except for Greek letters like $\mu$. For any positive integer $n$, we define $[n] = \set{1, \dots, n}$. The indicator function is denoted by $\I{\cdot}$. The $i$-th entry of vector $v$ is $v_i$. If the vector is already indexed, such as $v_j$, we write $v_{j, i}$. The big O notation up to logarithmic factors is $\tilde{O}$. 

We have a stochastic bandit with $K$ arms and denote the set of arms by $\cA = [K]$. When the arm is pulled, its reward is drawn i.i.d.\ from its reward distribution. The reward distribution of arm $i \in \cA$ is sub-Gaussian with mean $\mu_i$ and variance proxy $\sigma^2_i$. The \emph{best arm} is the arm with the highest mean reward,
\begin{align*}
  \textstyle
  i_*
  = \argmax_{i \in \cA} \mu_i\,.
\end{align*}
Without loss of generality, we make an assumption that the arms are ordered as $\mu_1 > \mu_2 \geq \ldots \geq \mu_K$. Therefore, arm $i_* = 1$ is a unique best arm. The agent has a budget of $n$ observations and the goal is to identify $i_*$ as accurately as possible after pulling all arms $n$ times. Specifically, let $\hat{I}$ denote the arm returned by the agent after $n$ pulls. Then our objective is to minimize the \emph{probability of misidentifying the best arm} $\prob{\hat{I} \neq i_*}$, which we also call a \emph{mistake probability}. This setting is known as \emph{fixed-budget BAI} \citep{bubeck09pure,audibert10best}. When observations are costly, it is natural to limit them by a fixed budget $n$.

Another commonly studied setting is \emph{fixed-confidence BAI} \citep{evendar06action,soare14bestarm}. Here the agent is given an upper bound on the mistake probability $\delta$ as an input and the goal is to attain $\prob{\hat{I} \neq i_*} \leq \delta$ at minimum budget $n$. Some works also studied both the fixed-budget and fixed-confidence settings \citep{gabillon12best,karnin13almost,kaufmann16complexity}.

\begin{figure}[t]
  \centering
  \includegraphics[width=0.48\textwidth]{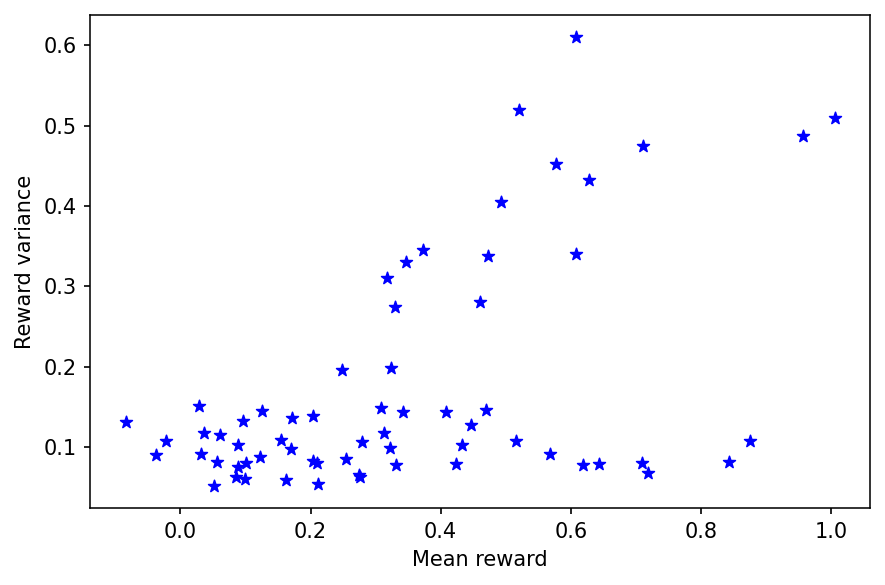}
  \caption{Mean rewards and variances for $K = 64$ arms in the Gaussian bandit in \cref{sec:synthetic experiments}.}
  \label{fig:synthetic}
\end{figure}

\section{Algorithms}
\label{sec:algorithms}

A near-optimal solution for fixed-budget BAI with homogeneous reward variances is sequential halving \citep{karnin13almost}. The key idea is to sequentially eliminate suboptimal arms in $\log_2 K$ stages. In each stage, all arms are pulled equally and the worst half of the arms are eliminated at the end of the stage. At the end of the last stage, only one arm $\hat{I}$ remains and that arm is the estimated best arm.

\begin{algorithm}[t]
  \caption{Meta-algorithm for sequential halving.}
  \label{alg:meta}
  \begin{algorithmic}[1]
    \State \textbf{Input:} Budget $n$, base algorithm \alg
    \Statex
    \State Number of stages $m \gets \ceils{\log_2 K}$
    \State $\cA_1 \gets \cA$
    \For{$s = 1, \dots, m$}
      \State Per-stage budget $\displaystyle n_s \gets \floors{n / m}$ 
      \For{$t = 1, \dots, n_s$}
        \State $I_{s, t} \gets \alg(s, t)$
        \State Observe reward $Y_{s, t, I_{s, t}}$ of arm $I_{s, t}$
      \EndFor
      \For{$i \in \cA_s$}
        \State $\displaystyle N_{s, i} \gets \sum_{t = 1}^{n_s} \I{I_{s, t} = i}$
        \State $\displaystyle \hat{\mu}_{s, i} \gets \frac{1}{N_{s, i}} \sum_{t = 1}^{n_s} \I{I_{s, t} = i} Y_{s, t, i}$
      \EndFor
      \State $\cA_{s + 1} \gets \set{\textrm{$\ceils{\abs{\cA_s} / 2}$ arms $i \in \cA_s$ with highest $\hat{\mu}_{s, i}$}}$
    \EndFor
    \Statex
    \State \textbf{Output:} The last remaining arm $\hat{I}$ in $\cA_{m + 1}$
  \end{algorithmic}
\end{algorithm}

\begin{algorithm}[t]
  \caption{\sh: Pulled arm in sequential halving.}
  \label{alg:sh}
  \begin{algorithmic}[1]
    \State \textbf{Input:} Stage $s$, round $t$
    \Statex
    \State $k \gets (t - 1) \bmod \abs{\cA_s} + 1$
    \State $I_{s, t} \gets \textrm{$k$-th arm in $\cA_s$}$
    \Statex
    \State \textbf{Output:} Arm to pull $I_{s, t}$
  \end{algorithmic}
\end{algorithm}

\begin{algorithm}[t]
  \caption{\shvar: Pulled arm in sequential halving with known heterogeneous reward variances.}
  \label{alg:shvar}
  \begin{algorithmic}[1]
    \State \textbf{Input:} Stage $s$, round $t$
    \Statex
    \For{$i \in \cA_s$}
      \State $\displaystyle N_{s, t, i} \gets \sum_{\ell = 1}^{t - 1} \I{I_{s, \ell} = i}$
    \EndFor
    \State $\displaystyle I_{s, t} \gets \argmax_{i \in \cA_s} \frac{\sigma_i^2}{N_{s, t, i}}$
    \Statex
    \State \textbf{Output:} Arm to pull $I_{s, t}$
  \end{algorithmic}
\end{algorithm}

The main algorithmic contribution of our work is that we generalize sequential halving of \citet{karnin13almost} to heterogeneous reward variances. All of our algorithms can be viewed as instances of a meta-algorithm (\cref{alg:meta}), which we describe in detail next. Its inputs are a \emph{budget} $n$ on the number of observations and base algorithm $\alg$. The meta-algorithm has $m$ stages (line 2) and the budget is divided equally across the stages, with a \emph{per-stage budget} $n_s = \floors{n / m}$ (line 5). In stage $s$, all \emph{remaining arms} $\cA_s$ are pulled according to $\alg$ (lines 6--8). At the end of stage $s$, the worst half of the remaining arms, as measured by their estimated mean rewards, is eliminated (lines 9--12). Here $Y_{s, t, i}$ is the \emph{stochastic reward} of arm $i$ in round $t$ of stage $s$, $I_{s, t} \in \cA_s$ is the \emph{pulled arm} in round $t$ of stage $s$, $N_{s, i}$ is the \emph{number of pulls} of arm $i$ in stage $s$, and $\hat{\mu}_{s, i}$ is its \emph{mean reward estimate} from all observations in stage $s$.

The sequential halving of \citet{karnin13almost} is an instance of \cref{alg:meta} for $\alg = \sh$. The pseudocode of \sh, which pulls all arms in stage $s$ equally, is in \cref{alg:sh}. We call the resulting algorithm \sh. This algorithm misidentifies the best arm with probability \citep{karnin13almost}
\begin{align}
  \prob{\hat{I} \neq 1}
  \leq 3 \log_2 K \exp\left[- \frac{n}{8 H_2 \log_2 K}\right]\,,
  \label{eq:sh error bound}
\end{align}
where
\begin{align}
  H_2
  = \max_{i \in \cA \setminus \set{1}} \frac{i}{\Delta_i^2}
  \label{eq:h2}
\end{align}
is a \emph{complexity parameter} and $\Delta_i = \mu_1 - \mu_i$ is the \emph{suboptimality gap} of arm $i$. The bound in \eqref{eq:sh error bound} decreases as budget $n$ increases and problem complexity $H_2$ decreases.

\sh is near optimal only in the setting of homogeneous reward variances. In this work, we study the general setting where the reward variances of arms vary, potentially as extremely as in our motivating example in \cref{fig:synthetic}. In this example, \sh would face arms with both low and high variances in each stage. A variance-adaptive \sh could adapt its budget allocation in each stage to the reward variances and thus eliminate suboptimal arms more effectively.

\subsection{Known Heterogeneous Reward Variances}
\label{sec:shvar}

We start with the setting of known reward variances. Let
\begin{align}
  \sigma_i^2
  = \var{Y_{s, t, i}}
  = \E{(Y_{s, t, i} - \mu_i)^2}
  \label{eq:reward variance}
\end{align}
be a known reward variance of arm $i$. Our proposed algorithm is an instance of \cref{alg:meta} for $\alg = \shvar$. The pseudocode of \shvar is in \cref{alg:shvar}. The key idea is to pull the arm with the highest variance of its mean reward estimate. The variance of the mean reward estimate of arm $i$ in round $t$ of stage $s$ is $\sigma_i^2 / N_{s, t, i}$, where $\sigma_i^2$ is the reward variance of arm $i$ and $N_{s, t, i}$ is the number of pulls of arm $i$ up to round $t$ of stage $s$. We call the resulting algorithm \shvar.

Note that \sh is an instance of \shvar. Specifically, when all $\sigma_i = \sigma$ for some $\sigma > 0$, \shvar pulls all arms equally, as in \sh. \shvar can be also viewed as pulling any arm $i$ in stage $s$ for
\begin{align}
  N_{s, i}
  \approx \frac{\sigma_i^2}{\sum_{j \in \cA_s} \sigma_j^2} n_s
  \label{eq:shvar allocation}
\end{align}
times. This is stated formally and proved below.

\begin{lemma}
\label{lem:shvar allocation} Fix stage $s$ and let the ideal number of pulls of arm $i \in \cA_s$ be
\begin{align*}
  \lambda_{s, i}
  = \frac{\sigma_i^2}{\sum_{j \in \cA_s} \sigma_j^2} n_s\,.
\end{align*}
Let all $\lambda_{s, i}$ be integers. Then \shvar pulls arm $i$ in stage $s$ exactly $\lambda_{s, i}$ times.
\end{lemma}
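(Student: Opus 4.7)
The plan is to argue by contradiction, using the fact that the greedy rule $\argmax_{i} \sigma_i^2 / N_{s, t, i}$ is equivalent to $\argmin_{i} N_{s, t, i} / \lambda_{s, i}$ under the hypothesis that the $\lambda_{s, i}$ are integer. Indeed, by the definition of $\lambda_{s, i}$, we have $\sigma_i^2 = \lambda_{s, i} \cdot (\sum_{j \in \cA_s} \sigma_j^2) / n_s$, and the factor $(\sum_{j \in \cA_s} \sigma_j^2) / n_s$ is the same across arms, so
\begin{align*}
  \frac{\sigma_i^2}{N_{s, t, i}}
  = \frac{\lambda_{s, i}}{N_{s, t, i}} \cdot \frac{\sum_{j \in \cA_s} \sigma_j^2}{n_s}\,.
\end{align*}
Maximizing the left-hand side over $i$ is therefore equivalent to minimizing $N_{s, t, i} / \lambda_{s, i}$, with the usual convention that a ratio with $N_{s, t, i} = 0$ denominator is treated as $0$ (so such an arm is pulled first).

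Next I would show by contradiction that $N_{s, i} \leq \lambda_{s, i}$ for every $i \in \cA_s$ at the end of stage $s$. Suppose some arm $i$ had $N_{s, i} > \lambda_{s, i}$. Since $\sum_{j \in \cA_s} N_{s, j} = n_s = \sum_{j \in \cA_s} \lambda_{s, j}$ (using that the targets are integers summing exactly to $n_s$), there must exist another arm $j$ with $N_{s, j} < \lambda_{s, j}$. Let $t^*$ be the last round in stage $s$ at which arm $i$ was pulled. Then $N_{s, t^*, i} = N_{s, i} - 1 \geq \lambda_{s, i}$, so $N_{s, t^*, i} / \lambda_{s, i} \geq 1$, while $N_{s, t^*, j} \leq N_{s, j} \leq \lambda_{s, j} - 1$, so $N_{s, t^*, j} / \lambda_{s, j} \leq 1 - 1/\lambda_{s, j} < 1$. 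Therefore
\begin{align*}
  \frac{N_{s, t^*, j}}{\lambda_{s, j}} < \frac{N_{s, t^*, i}}{\lambda_{s, i}}\,,
\end{align*}
which by the equivalence above gives $\sigma_j^2 / N_{s, t^*, j} > \sigma_i^2 / N_{s, t^*, i}$ (also valid when $N_{s, t^*, j} = 0$, in which case the left side is $+\infty$). Hence the greedy rule would have selected $j$ rather than $i$ at round $t^*$, contradicting $I_{s, t^*} = i$.

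Combining $N_{s, i} \leq \lambda_{s, i}$ for all $i \in \cA_s$ with $\sum_{i \in \cA_s} N_{s, i} = n_s = \sum_{i \in \cA_s} \lambda_{s, i}$ forces equality arm by arm, proving the claim. The only subtlety I anticipate is the handling of tie-breaking and of the $N_{s, t, i} = 0$ case, but the strict inequality $N_{s, t^*, j} / \lambda_{s, j} < 1 \leq N_{s, t^*, i} / \lambda_{s, i}$ produced by the integrality of the $\lambda_{s, i}$ makes the greedy comparison strict, so no tie-breaking rule can rescue the contradictory choice; this is really where the integrality hypothesis does its work.
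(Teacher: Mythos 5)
Your proof is correct and follows essentially the same route as the paper's: a contradiction built from an over-pulled arm and an under-pulled arm, examined at the last round the over-pulled arm is selected, with the integrality of the $\lambda_{s,i}$ making the greedy comparison strict. Your normalization to $N_{s,t,i}/\lambda_{s,i}$ is a cosmetic reformulation of the paper's comparison of $\sigma_i^2/N_{s,t,i}$ against the common level $\bigl(\sum_{j \in \cA_s} \sigma_j^2\bigr)/n_s$, though your version is slightly more careful about tie-breaking and the $N_{s,t,i}=0$ case.
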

\begin{proof}
First, suppose that \shvar pulls each arm $i$ exactly $\lambda_{s, i}$ times. Then the variances of all mean reward estimates at the end of stage $s$ are identical, because
\begin{align*}
  \frac{\sigma_i^2}{N_{s, i}}
  = \frac{\sigma_i^2}{\lambda_{s, i}}
  = \frac{\sigma_i^2}{\frac{\sigma_i^2}{\sum_{j \in \cA_s} \sigma_j^2} n_s}
  = \frac{\sum_{j \in \cA_s} \sigma_j^2}{n_s}\,.
\end{align*}
Now suppose that this is not true. This implies that there exists an over-pulled arm $i \in \cA_s$ and an under-pulled arm $k \in \cA_s$ such that
\begin{align}
  \frac{\sigma_i^2}{N_{s, i}}
  < \frac{\sum_{j \in \cA_s} \sigma_j^2}{n_s}
  < \frac{\sigma_k^2}{N_{s, k}}\,.
  \label{eq:non-uniform estimator variance}
\end{align}
Since arm $i \in \cA_s$ is over-pulled and $\lambda_{s, i}$ is an integer, there must exist a round $t \in [n_s]$ such that
\begin{align*}
  \frac{\sigma_i^2}{N_{s, t, i}}
  = \frac{\sigma_i^2}{\lambda_{s, i}}
  = \frac{\sum_{j \in \cA_s} \sigma_j^2}{n_s}\,.
\end{align*}
Let $t$ be the last round where this equality holds, meaning that arm $i$ is pulled in round $t$.

Now we combine the second inequality in \eqref{eq:non-uniform estimator variance} with $N_{s, k} \geq N_{s, t, k}$, which holds by definition, and get
\begin{align*}
  \frac{\sum_{j \in \cA_s} \sigma_j^2}{n_s}
  < \frac{\sigma_k^2}{N_{s, k}}
  \leq \frac{\sigma_k^2}{N_{s, t, k}}\,.
\end{align*}
The last two sets of inequalities lead to a contradiction. On one hand, we know that arm $i$ is pulled in round $t$. On the other hand, we have $\sigma_i^2 / N_{s, t, i} < \sigma_k^2 / N_{s, t, k}$, which means that arm $i$ cannot be pulled. This completes the proof.
\end{proof}

\cref{lem:shvar allocation} says that each arm $i \in \cA_s$ is pulled $O(\sigma_i^2)$ times. Since the mean reward estimate of arm $i$ at the end of stage $s$ has variance $\sigma_i^2 / N_{s, i}$, the variances of all estimates at the end of stage $s$ are identical, $\left(\sum_{i \in \cA_s} \sigma_i^2\right) / n_s$. This relates our problem to the G-optimal design \citep{pukelsheim93optimal}. Specifically, the $G$-optimal design for independent experiments $i \in \cA_s$ is an allocation of observations $(N_{s, i})_{i \in \cA_s}$ such that $\sum_{i \in \cA_s} N_{s, i} = n_s$ and the maximum variance
\begin{align}
  \max_{i \in \cA_s} \frac{\sigma_i^2}{N_{s, i}}
  \label{eq:maximum variance}
\end{align}
is minimized. This happens precisely when all $\sigma_i^2 / N_{s, i}$ are identical, when $N_{s, i} = \lambda_{s, i}$ for $\lambda_{s, i}$ in \cref{lem:shvar allocation}.

\subsection{Unknown Heterogeneous Reward Variances}
\label{sec:shadavar}

Our second proposal is an algorithm for unknown reward variances. One natural idea, which is expected to be practical but hard to analyze, is to replace $\sigma_i^2$ in \shvar with its empirical estimate from the past $t - 1$ rounds in stage $s$,
\begin{align*}
  \hat{\sigma}_{s, t, i}^2
  = \frac{1}{N_{s, t, i} - 1} \sum_{\ell = 1}^{t - 1} \I{I_{s, \ell} = i}
  (Y_{s, \ell, i} - \hat{\mu}_{s, t, i})^2\,,
\end{align*} 
where
\begin{align*}
  \hat{\mu}_{s, t, i}
  = \frac{1}{N_{s, t, i}} \sum_{\ell = 1}^{t - 1} \I{I_{s, \ell} = i} Y_{s, \ell, i}
\end{align*}
is the empirical mean reward of arm $i$ in round $t$ of stage $s$. This design would be hard to analyze because $\hat{\sigma}_{s, t, i}$ can underestimate $\sigma_i$, and thus is not an optimistic estimate.

The key idea in our solution is to act optimistically using an \emph{upper confidence bound (UCB)} on the reward variance. To derive it, we make an assumption that the reward noise is Gaussian. Specifically, the reward of arm $i$ in round $t$ of stage $s$ is distributed as $Y_{s, t, i} \sim \cN(\mu_i, \sigma_i^2)$. This allows us to derive the following upper and lower bounds on the unkown variance $\sigma_i^2$.

\begin{algorithm}[t]
  \caption{\shadavar: Pulled arm in sequential halving with unknown heterogeneous reward variances.}
  \label{alg:shadavar}
  \begin{algorithmic}[1]
    \State \textbf{Input:} Stage $s$, round $t$
    \Statex
    \If{$t \leq \abs{\cA_s} (4 \log(1 / \delta) + 1)$}
      \State $k \gets (t - 1) \bmod \abs{\cA_s} + 1$
      \State $I_{s, t} \gets \textrm{$k$-th arm in $\cA_s$}$
    \Else
      \For{$i \in \cA_s$}
        \State $\displaystyle N_{s, t, i} \gets \sum_{\ell = 1}^{t - 1} \I{I_{s, \ell} = i}$
      \EndFor
      \State $\displaystyle I_{s, t} \gets \argmax_{i \in \cA_s} \frac{U_{s, t, i}}{N_{s, t, i}}$
    \EndIf
    \Statex
    \State \textbf{Output:} Arm to pull $I_{s, t}$
  \end{algorithmic}
\end{algorithm}

\begin{lemma}
\label{lem:concentration} Fix stage $s$, round $t \in [n_s]$, arm $i \in \cA_s$, and failure probability $\delta \in (0, 1)$. Let
\begin{align*}
  N
  = N_{s, t, i} - 1
\end{align*}
and suppose that $N > 4 \log(1 / \delta)$. Then
\begin{align*}
  \prob{\sigma_i^2
  \geq \frac{\hat{\sigma}_{s, t, i}^2}{1 - 2 \sqrt{\frac{\log(1 / \delta)}{N}}}}
  \leq \delta
\end{align*}
holds with probability at least $1 - \delta$. Analogously,
\begin{align*}
  \prob{\hat{\sigma}_{s, t, i}^2
  \geq \sigma_i^2 \left[1 + 2 \sqrt{\frac{\log(1 / \delta)}{N}} +
  \frac{2 \log(1 / \delta)}{N}\right]}
  \leq \delta
\end{align*}
holds with probability at least $1 - \delta$.
\end{lemma}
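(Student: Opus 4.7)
The plan is to reduce both statements to standard chi-square tail bounds for the sample variance of i.i.d.\ Gaussian observations. Under the Gaussian assumption, conditioning on the pull counts, the $N_{s,t,i} = N+1$ rewards of arm $i$ collected prior to round $t$ of stage $s$ are i.i.d.\ $\cN(\mu_i, \sigma_i^2)$. By Cochran's theorem, the unbiased sample variance $\hat{\sigma}_{s,t,i}^2$ then satisfies
\begin{align*}
  Z := \frac{N\,\hat{\sigma}_{s,t,i}^2}{\sigma_i^2} \sim \chi^2_N.
\end{align*}
The randomness of $N_{s,t,i}$ (it depends on the adaptive decisions of \shadavar) is not a real issue, because each fresh pull of arm $i$ still yields an independent Gaussian draw, so the distributional statement holds conditionally on $N_{s,t,i} = N+1$, and we may treat $N$ as fixed throughout the argument.

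Next I would invoke the Laurent--Massart chi-square concentration inequalities: for all $x > 0$,
\begin{align*}
  \prob{Z \geq N + 2\sqrt{N x} + 2x} &\leq e^{-x}, \\
  \prob{Z \leq N - 2\sqrt{N x}} &\leq e^{-x}.
\end{align*}
Setting $x = \log(1/\delta)$ and dividing by $N$ rewrites the upper tail as
\begin{align*}
  \prob{\hat{\sigma}_{s,t,i}^2 \geq \sigma_i^2\left[1 + 2\sqrt{\tfrac{\log(1/\delta)}{N}} + \tfrac{2\log(1/\delta)}{N}\right]} \leq \delta,
\end{align*}
which is exactly the second statement. For the first statement, the lower tail gives
\begin{align*}
  \prob{\hat{\sigma}_{s,t,i}^2 \leq \sigma_i^2\left(1 - 2\sqrt{\tfrac{\log(1/\delta)}{N}}\right)} \leq \delta.
\end{align*}

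Finally I would rearrange this bound into the form stated in the lemma. The hypothesis $N > 4\log(1/\delta)$ is precisely what makes $1 - 2\sqrt{\log(1/\delta)/N} > 0$, so dividing both sides of the inner inequality by this positive factor is a reversible operation and yields the equivalent event $\sigma_i^2 \geq \hat{\sigma}_{s,t,i}^2 / (1 - 2\sqrt{\log(1/\delta)/N})$, completing the first claim. The only nontrivial step is the chi-square tail itself, which is a well-known textbook result; the rest is rearrangement, so I do not anticipate any real obstacle beyond clearly justifying that the Gaussianity of the rewards makes $\hat{\sigma}_{s,t,i}^2$ exactly chi-square distributed despite the adaptive sampling.
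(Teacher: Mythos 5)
Your proposal is correct and follows essentially the same route as the paper: Cochran's theorem to get that $N\hat{\sigma}_{s,t,i}^2/\sigma_i^2$ is $\chi^2_N$, the Laurent--Massart tail bounds (their Lemma 1, inequalities (4.3) and (4.4)) with $x=\log(1/\delta)$, and the same rearrangement using $N>4\log(1/\delta)$ to make the denominator $1-2\sqrt{\log(1/\delta)/N}$ positive. The only difference is that you explicitly flag the conditioning on the adaptive pull count, which the paper leaves implicit; neither treatment fully resolves that subtlety, but your argument matches the paper's.
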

\begin{proof}
The first claim is proved as follows. By Cochran's theorem, we have that $\hat{\sigma}_{s, t, i}^2 N / \sigma_i^2$ is a $\chi^2$ random variable with $N$ degrees of freedom. Its concentration was analyzed in \citet{laurent00adaptive}. More specifically, by (4.4) in \citet{laurent00adaptive}, an immediate corollary of their Lemma 1, we have
\begin{align*}
  \prob{N - \frac{\hat{\sigma}_{s, t, i}^2 N}{\sigma_i^2}
  \geq 2 \sqrt{N \log(1 / \delta)}}
  \leq \delta\,.
\end{align*}
Now we divide both sides in the probability by $N$, multiply by $\sigma_i^2$, and rearrange the formula as
\begin{align*}
  \prob{\sigma_i^2 \left(1 - 2 \sqrt{\log(1 / \delta) / N}\right)
  \geq \hat{\sigma}_{s, t, i}^2}
  \leq \delta\,.
\end{align*}
When $1 - 2 \sqrt{\log(1 / \delta) / N} > 0$, we can divide both sides by it and get the first claim in \cref{lem:concentration}.

The second claim is proved analogously. Specifically, by (4.3) in \citet{laurent00adaptive}, an immediate corollary of their Lemma 1, we have
\begin{align*}
  \prob{\frac{\hat{\sigma}_{s, t, i}^2 N}{\sigma_i^2} - N
  \geq 2 \sqrt{N \log(1 / \delta)} + 2 \log(1 / \delta)}
  \leq \delta\,.
\end{align*}
Now we divide both sides in the probability by $N$, multiply by $\sigma_i^2$, and obtain the second claim in \cref{lem:concentration}. This concludes the proof.
\end{proof}

By \cref{lem:concentration}, when $N_{s, t, i} > 4 \log(1 / \delta) + 1$,
\begin{align}
  U_{s, t, i}
  = \frac{\hat{\sigma}_{s, t, i}^2}
  {1 - 2 \sqrt{\frac{\log(1 / \delta)}{N_{s, t, i} - 1}}}
  \label{eq:variance ucb}
\end{align}
is a high-probability upper bound on the reward variance of arm $i$ in round $t$ of stage $s$, which holds with probability at least $1 - \delta$. This bound decreases as the number of observations $N_{s, t, i}$ increases and confidence $\delta$ decreases. To apply the bound across multiple stages, rounds, and arms, we use a union bound.

The bound in \eqref{eq:variance ucb} leads to our algorithm that overestimates the variance. The algorithm is an instance of \cref{alg:meta} for $\alg = \shadavar$. The pseudocode of \shadavar is in \cref{alg:shadavar}. To guarantee $N_{s, t, i} > 4 \log(1 / \delta) + 1$, we pull all arms $\cA_s$ in any stage $s$ for $4 \log(1 / \delta) + 1$ times initially. We call the resulting algorithm \shadavar.

Note that \shadavar can be viewed as a variant of \shvar where $U_{s, t, i}$ replaces $\sigma_i^2$. Therefore, it can also be viewed as solving the G-optimal design in \eqref{eq:maximum variance} without knowing reward variances $\sigma_i^2$; and \shadavar is of a broader interest to the optimal experiment design community \citep{pukelsheim93optimal}. We also note that the assumption of Gaussian noise in the design of \shadavar is limiting. To address this issue, we experiment with non-Gaussian noise in \cref{sec:movielens experiments}.

\section{Analysis}
\label{sec:analysis}

This section comprises three analyses. In \cref{sec:error bound shvar}, we bound the probability that \shvar, an algorithm that knows reward variances, misidentifies the best arm. In \cref{sec:error bound shvar2}, we provide an alternative analysis that does not rely on the closed form in \eqref{eq:shvar allocation}. Finally, in \cref{sec:error bound shadavar}, we bound the probability that \shadavar, an algorithm that learns reward variances, misidentifies the best arm.

All analyses are under the assumption of Gaussian reward noise. Specifically, the reward of arm $i$ in round $t$ of stage $s$ is distributed as $Y_{s, t, i} \sim \cN(\mu_i, \sigma_i^2)$.

\subsection{Error Bound of \shvar}
\label{sec:error bound shvar}

We start with analyzing \shvar, which is a stepping stone for analyzing \shadavar. To simplify the proof, we assume that both $m$ and $n_s$ are integers. We also assume that all budget allocations have integral solutions in \cref{lem:shvar allocation}.

\begin{theorem}
\label{thm:shvar} \shvar misidentifies the best arm with probability
\begin{align*}
  \prob{\hat{I} \neq 1}
  \leq 2 \log_2 K \exp\left[- \frac{n \Delta_{\min}^2}
  {4 \log_2 K \sum_{j \in \cA} \sigma_j^2}\right]\,,
\end{align*}
where $\Delta_{\min} = \mu_1 - \mu_2$ is the minimum gap.
\end{theorem}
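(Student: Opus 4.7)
The plan is to follow the template of Karnin et al.'s sequential halving analysis but exploit \cref{lem:shvar allocation} to control the variances of the mean reward estimates in a heterogeneous setting. The proof will decompose the mistake event stage by stage: by a union bound, $\prob{\hat I \neq 1} \leq \sum_{s=1}^{m} \prob{1 \notin \cA_{s+1} \mid 1 \in \cA_s}$, so it suffices to bound the per-stage probability that arm 1 is eliminated at the end of stage $s$ and then multiply by $m = \log_2 K$.

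For a single stage $s$, conditional on $1 \in \cA_s$, the elimination event implies that at least $\lceil |\cA_s|/2 \rceil \geq |\cA_s|/2$ arms $j \in \cA_s \setminus \{1\}$ satisfy $\hat\mu_{s,j} \geq \hat\mu_{s,1}$. I will then apply a Markov/averaging step:
\begin{align*}
\prob{1 \notin \cA_{s+1}}
\leq \frac{2}{|\cA_s|} \sum_{j \in \cA_s \setminus \{1\}} \prob{\hat\mu_{s,j} \geq \hat\mu_{s,1}}\,.
\end{align*}
The key ingredient for the per-arm bound is \cref{lem:shvar allocation}: under the idealized allocation $N_{s,i} = \lambda_{s,i}$, the variances $\sigma_i^2 / N_{s,i}$ are equal to $\big(\sum_{k \in \cA_s} \sigma_k^2\big)/n_s$ for every $i \in \cA_s$. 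Thus under the Gaussian noise assumption, $\hat\mu_{s,j} - \hat\mu_{s,1}$ is Gaussian with mean $-\Delta_j$ and variance $2\big(\sum_{k \in \cA_s}\sigma_k^2\big)/n_s$, and the standard Gaussian tail bound yields
\begin{align*}
\prob{\hat\mu_{s,j} \geq \hat\mu_{s,1}}
\leq \exp\!\left[-\frac{\Delta_j^2\, n_s}{4 \sum_{k \in \cA_s} \sigma_k^2}\right]
\leq \exp\!\left[-\frac{\Delta_{\min}^2\, n_s}{4 \sum_{k \in \cA} \sigma_k^2}\right]\,,
\end{align*}
where the final inequality uses $\Delta_j \geq \Delta_{\min}$ (from the ordering $\mu_1 > \mu_2 \geq \ldots$) and $\sum_{k \in \cA_s} \sigma_k^2 \leq \sum_{k \in \cA}\sigma_k^2$.

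Plugging this back gives $\prob{1 \notin \cA_{s+1}} \leq 2 \exp\!\big[-\Delta_{\min}^2 n_s / (4 \sum_{k \in \cA} \sigma_k^2)\big]$, since $2(|\cA_s|-1)/|\cA_s| \leq 2$. Summing over the $m = \log_2 K$ stages and substituting $n_s = n/m$ yields exactly the stated bound. The main subtle step is the per-stage argument: the Markov averaging trick is needed to avoid an extra $K$ factor that a naive union bound over the suboptimal arms would produce, and the equal-variance identity from \cref{lem:shvar allocation} is what makes the Gaussian tail bound clean; without that identity one would be left with a $\max_{i \in \cA_s}\sigma_i^2/N_{s,i}$ term that is harder to relate to $\sum_k \sigma_k^2$. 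A minor issue is handling the integrality assumption on $\lambda_{s,i}$, which the theorem statement sidesteps by explicit assumption so the closed-form allocation from \cref{lem:shvar allocation} can be used directly.
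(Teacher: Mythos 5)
Your proposal is correct and follows essentially the same route as the paper's proof: the same stage-wise decomposition over the event that arm $1$ survives each stage, the same Markov/averaging step on the count of arms beating arm $1$ (which avoids the extra factor a naive union bound would give), and the same per-arm Gaussian tail bound exploiting the equal-variance identity $\sigma_i^2/N_{s,i} = \bigl(\sum_{j \in \cA_s}\sigma_j^2\bigr)/n_s$ from \cref{lem:shvar allocation}. The only cosmetic difference is that you threshold the count at $\abs{\cA_s}/2$ where the paper writes $n_s/2$; both yield the identical factor of $2$.
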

\begin{proof}
The claim is proved in \cref{sec:shvar proof}. We follow the outline in \citet{karnin13almost}. The novelty is in extending the proof to heterogeneous reward variances. This requires a non-uniform budget allocation, where arms with higher reward variances are pulled more (\cref{lem:shvar allocation}).
\end{proof}

The bound in \cref{thm:shvar} depends on all quantities as expected. It decreases as budget $n$ and minimum gap $\Delta_{\min}$ increase, and the number of arms $K$ and variances $\sigma_j^2$ decrease. \shvar reduces to \sh in \citet{karnin13almost} when $\sigma_i^2 = 1 / 4$ for all arms $i \in \cA$. The bounds of \sh and \shvar become comparable when we apply $H_2 \leq K / \Delta_{\min}^2$ in \eqref{eq:sh error bound} and note that $\sum_{j \in \cA} \sigma_j^2 = K / 4$ in \cref{thm:shvar}. The extra factor of $8$ in the exponent of \eqref{eq:sh error bound} is due to a different proof, which yields a finer dependence on gaps.

\subsection{Alternative Error Bound of \shvar}
\label{sec:error bound shvar2}

Now we analyze \shvar differently. The resulting bound is weaker than that in \cref{thm:shvar} but its proof can be easily extended to \shadavar.

\begin{theorem}
\label{thm:shvar2} \shvar misidentifies the best arm with probability
\begin{align*}
  \prob{\hat{I} \neq 1}
  \leq 2 \log_2 K \exp\left[- \frac{(n - K \log K) \Delta_{\min}^2}
  {4 \sigma_{\max}^2 K \log_2 K}\right]\,,
\end{align*}
where $\Delta_{\min} = \mu_1 - \mu_2$ is the minimum gap and $\sigma_{\max}^2 = \max_{i \in \cA} \sigma_i^2$ is the maximum reward variance.
\end{theorem}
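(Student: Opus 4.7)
The plan is to run Karnin's sequential halving analysis stage by stage, but to replace the exact integer allocation in \cref{lem:shvar allocation} with a purely algorithmic \emph{lower bound} on the pull counts $N_{s, i}$ that uses only the greedy rule and the budget identity $\sum_{j \in \cA_s} N_{s, j} = n_s$. This is the only step in which variances enter, and the weaker lower-bound flavor is precisely what will carry over to \shadavar, where the true $\sigma_i^2$ is unknown and no closed-form allocation is available.

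The core ingredient is the following pull-count lemma: for every stage $s$ and every $j \in \cA_s$,
\begin{align*}
  N_{s, j} \geq \frac{\sigma_j^2\,(n_s - \abs{\cA_s})}{\abs{\cA_s}\,\sigma_{\max}^2}.
\end{align*}
To prove it, I let $i^\star \in \argmax_{i \in \cA_s} N_{s, i}$ and take $\tau$ to be the last round of stage $s$ at which $i^\star$ is pulled. The initial round-robin of \shvar---triggered by $\sigma_i^2 / 0 = \infty$ when $N_{s, t, i} = 0$---ensures every arm in $\cA_s$ is pulled at least once, so $\tau$ is well-defined; the edge case $n_s = \abs{\cA_s}$ is trivial since then the right-hand side vanishes. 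The greedy rule at round $\tau$ gives $\sigma_{i^\star}^2 / (N_{s, i^\star} - 1) \geq \sigma_j^2 / N_{s, \tau, j}$ for every $j \in \cA_s$, so
\begin{align*}
  N_{s, j}
  \geq N_{s, \tau, j}
  \geq \frac{\sigma_j^2\,(N_{s, i^\star} - 1)}{\sigma_{i^\star}^2}
  \geq \frac{\sigma_j^2\,(N_{s, i^\star} - 1)}{\sigma_{\max}^2},
\end{align*}
and combining with $N_{s, i^\star} \geq n_s / \abs{\cA_s}$ (the max of $\abs{\cA_s}$ pull counts summing to $n_s$) finishes the proof.

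Given the lemma, the rest is a direct translation of the sequential-halving argument. For any suboptimal arm $i$, the Gaussian variable $\hat{\mu}_{s, 1} - \hat{\mu}_{s, i}$ has mean $\Delta_i$, and the lemma bounds its variance by $2 \abs{\cA_s}\,\sigma_{\max}^2 / (n_s - \abs{\cA_s})$---the $\sigma_i^2$ factors cancel on substitution---so Gaussian concentration yields
\begin{align*}
  \prob{\hat{\mu}_{s, 1} \leq \hat{\mu}_{s, i}}
  \leq \exp\!\left[- \frac{(n_s - \abs{\cA_s})\,\Delta_{\min}^2}{4 \abs{\cA_s}\,\sigma_{\max}^2}\right].
\end{align*}
The best arm is eliminated in stage $s$ only if at least $\lceil \abs{\cA_s} / 2 \rceil$ of the remaining arms beat it on empirical mean; a Markov-inequality bound on this count yields a per-stage mistake probability at most twice the display above. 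Union-bounding over the $m = \log_2 K$ stages and observing that the per-stage exponent is minimized at $s = 1$, where $\abs{\cA_s} = K$ and $n_s = n / m$, produces the overhead $m K = K \log_2 K$ in the statement.

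The principal obstacle is the pull-count lemma: it is what decouples the per-arm concentration from $\sigma_i^2$ and produces the uniform $\sigma_{\max}^2$, at the cost of the coarse relaxations $\sigma_{i^\star}^2 \leq \sigma_{\max}^2$ and $N_{s, i^\star} \geq n_s / \abs{\cA_s}$ in place of the exact equalization behind \cref{lem:shvar allocation}. These relaxations are exactly why \cref{thm:shvar2} is strictly looser than \cref{thm:shvar}. The payoff is that the argument only uses the greedy selection rule and the budget constraint---it never appeals to a closed-form allocation---so the same skeleton will survive the substitution of $\sigma_i^2$ by the UCB $U_{s, t, i}$ from \eqref{eq:variance ucb} when \shadavar is analyzed.
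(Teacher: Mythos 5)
Your proposal is correct and follows essentially the same route as the paper: the pull-count lower bound $N_{s,j} \geq \frac{\sigma_j^2}{\sigma_{\max}^2}\left(\frac{n_s}{\abs{\cA_s}} - 1\right)$ is exactly the paper's \cref{lem:shvar pulls}, proved by the same device (the most-pulled arm, the round of its last pull, and the greedy comparison), and the subsequent concentration, Markov, and union-bound steps match \cref{sec:shvar proof}. The only cosmetic differences are your explicit handling of the initial round-robin forced by $N_{s,t,i}=0$ and writing the bound as $(n_s - \abs{\cA_s})/\abs{\cA_s}$ rather than $n_s/\abs{\cA_s} - 1$.
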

\begin{proof}
The claim is proved in \cref{sec:shvar2 proof}. The key idea in the proof is to derive a lower bound on the number of pulls of any arm $i$ in stage $s$, instead of using the closed form of $N_{s, i}$ in \eqref{eq:shvar allocation}. The lower bound is
\begin{align*}
  N_{s, i}
  \geq \frac{\sigma_i^2}{\sigma_{\max}^2} \left(\frac{n_s}{\abs{\cA_s}} - 1\right)\,.
\end{align*}
An important property of the bound is that it is $\Omega(\sigma_i^2 n_s)$, similarly to $N_{s, i}$ in \eqref{eq:shvar allocation}. Therefore, the rest of the proof is similar to that of \cref{thm:shvar}.
\end{proof}

As in \cref{thm:shvar}, the bound in \cref{thm:shvar2} depends on all quantities as expected. It decreases as budget $n$ and minimum gap $\Delta_{\min}$ increase, and the number of arms $K$ and maximum variance $\sigma_{\max}^2$ decrease. The bound approaches that in \cref{thm:shvar} when all reward variances are identical.

\subsection{Error Bound of \shadavar}
\label{sec:error bound shadavar}

Now we analyze \shadavar.

\begin{theorem}
\label{thm:shadavar} Suppose that $\delta < 1 / (K n)$ and
\begin{align*}
  n \geq
  K \log_2 K (4 \log(K n / \delta) + 1)\,.
\end{align*}
Then \shadavar misidentifies the best arm with probability
\begin{align*}
  \prob{\hat{I} \neq 1}
  \leq 2 \log_2 K \exp\left[- \alpha \frac{(n - K \log K) \Delta_{\min}^2}
  {4 \sigma_{\max}^2 K \log_2 K}\right]\,,
\end{align*}
where $\Delta_{\min}$ and $\sigma_{\max}^2$ are defined in \cref{thm:shvar2}, and
\begin{align*}
  \alpha
  = \frac{1 - 2 \sqrt{\frac{\log(K n / \delta)}{n / K - 2}}}
  {1 + 2 \sqrt{\frac{\log(K n / \delta)}{n / K - 2}} +
  \frac{2 \log(K n / \delta)}{n / K - 2}}\,.
\end{align*}
\end{theorem}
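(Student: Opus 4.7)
The plan is to replicate the argument of \cref{thm:shvar2} with the known variance $\sigma_i^2$ replaced by the UCB $U_{s,t,i}$, paying a multiplicative factor $\alpha$ that quantifies the tightness of $U_{s,t,i}$ around $\sigma_i^2$. First I would set up a high-probability event by invoking \cref{lem:concentration} with failure probability $\delta/(Kn)$ and taking a union bound over all $(s,t,i)$ triples at which the UCB decision is made (at most $nK$ of them). Call the resulting event $\cE$. The initial round-robin phase in \cref{alg:shadavar} guarantees $N_{s,t,i} - 1 > 4\log(Kn/\delta)$ whenever the UCB is invoked, so the concentration premise is always met, and the hypothesis $n \geq K \log_2 K (4\log(Kn/\delta)+1)$ keeps the denominators in $\alpha$ positive. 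On $\cE$, chaining the two inequalities from \cref{lem:concentration} yields the sandwich $\sigma_i^2 \leq U_{s,t,i} \leq \sigma_i^2/\alpha(N_{s,t,i})$, where $\alpha(N)$ is the theorem's formula with $n/K - 2$ replaced by $N-1$.

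The main step is a pull lower bound analogous to the one used in the proof of \cref{thm:shvar2}:
\begin{align*}
  N_{s,i} \geq \alpha \cdot \frac{\sigma_i^2}{\sigma_{\max}^2}\left(\frac{n_s}{\abs{\cA_s}} - 1\right) \quad \forall i \in \cA_s.
\end{align*}
To prove it, let $j^{**}$ be the most-pulled arm in stage $s$, so $N_{s, j^{**}} \geq \ceils{n_s/\abs{\cA_s}}$, and let $\tau^*$ be the last round at which $j^{**}$ is pulled. The argmax rule at $\tau^*$ forces $U_{s,\tau^*, j^{**}}/N_{s,\tau^*, j^{**}} \geq U_{s,\tau^*, i}/N_{s,\tau^*, i}$ for every $i \in \cA_s$; substituting $U_{s,\tau^*, i} \geq \sigma_i^2$ and $U_{s,\tau^*, j^{**}} \leq \sigma_{\max}^2/\alpha$ and then rearranging yields the display. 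Crucially, concentration only needs to hold at the single round $\tau^*$, at which $N_{s,\tau^*, j^{**}} - 1 \geq n_s/\abs{\cA_s} - 2$ is as large as the stage average, so $\alpha$ is bounded away from $0$ under the budget hypothesis. With this lower bound in hand, the rest of the argument mirrors \cref{thm:shvar2}: $\sigma_i^2/N_{s,i}$ is at most $\sigma_{\max}^2/[\alpha(n_s/\abs{\cA_s}-1)]$, a Hoeffding-type concentration of $\hat{\mu}_{s,i}$ controls the probability that a suboptimal arm beats arm $1$ in any stage, and a union bound over the $m = \log_2 K$ stages (combined with $\prob{\cE^c}$) delivers the stated bound, with $\alpha$ entering the exponent exactly as in the proof of \cref{thm:shvar2}.

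The main obstacle will be the pull lower bound. Because $U_{s,t,i}$ is random and time-varying, a naive argument would require concentration to hold at the worst round of the stage, where $N_{s,t,i}$ can be as small as the initial count $4\log(Kn/\delta)+1$, driving $\alpha$ toward $0$. The ``last pull of the most-pulled arm'' trick is what rescues a meaningful $\alpha$: it localizes the concentration requirement to a single round per stage at which $N$ is guaranteed to be at least the stage average. Reconciling the per-stage quantity $n_s/\abs{\cA_s}$ with the uniform expression $n/K - 2$ that appears in the theorem's $\alpha$ is then a standard bookkeeping step that follows from the budget hypothesis.
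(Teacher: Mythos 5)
Your proposal is correct and follows essentially the same route as the paper's proof: the same "last pull of the most-pulled arm" argument (the paper's \cref{lem:shvar pulls} adapted to the UCBs, yielding \cref{lem:shadavar pulls}), the same sandwich $\sigma_i^2 \leq U_{s,t,i} \leq \sigma_i^2/\alpha$ from \cref{lem:concentration} under a union bound over the $Kn$ (stage, round, arm) triples, and the same reduction to the stage-wise elimination analysis of \cref{thm:shvar2}. No substantive differences to report.
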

\begin{proof}
The claim is proved in \cref{sec:shadavar proof}. The key idea in the proof is to derive a lower bound on the number of pulls of any arm $i$ in stage $s$, similarly to that in \cref{thm:shvar2}. The lower bound is
\begin{align*}
  N_{s, i}
  \geq \frac{\sigma_i^2}{\sigma_{\max}^2} \alpha(\abs{\cA_s}, n_s, \delta)
  \left(\frac{n_s}{\abs{\cA_s}} - 1\right)
\end{align*}
and holds with probability at least $1 - \delta$. Since the bound is $\Omega(\sigma_i^2 n_s)$, as in the proof of \cref{thm:shvar2}, the rest of the proof is similar. The main difference from \cref{thm:shvar2} is in factor $\alpha(\abs{\cA_s}, n_s, \delta)$, which converges to $1$ as $n_s \to \infty$.
\end{proof}

The bound in \cref{thm:shadavar} depends on all quantities as expected. It decreases as budget $n$ and minimum gap $\Delta_{\min}$ increase, and the number of arms $K$ and maximum variance $\sigma_{\max}^2$ decrease. As $n \to \infty$, we get $\alpha \to 1$ and the bound converges to that in \cref{thm:shvar2}.

\section{Experiments}
\label{sec:experiments}

In this section, we empirically evaluate our proposed algorithms, \shvar and \shadavar, and compare them to algorithms from prior works. We choose the following baselines: uniform allocation (\unif), sequential halving (\sh) \citep{karnin13almost}, gap-based exploration (\gape) \citep{gabillon11multibandit}, gap-based exploration with variance (\gapev) \citep{gabillon11multibandit}, and variance-based rejects (\vbr) \citep{faella20rapidly}. 

\unif allocates equal budget to all arms and \sh was originally proposed for homogeneous reward variances. Neither \unif nor \sh can adapt to heterogenuous reward variances. \gape, \gapev and \vbr are variance-adaptive BAI methods from related works (\cref{sec:related work}). In \gape, we use $H$ from Theorem 1 of \citet{gabillon11multibandit}. In \gapev, we use $H$ from Theorem 2 of \citet{gabillon11multibandit}. Both \gape and \gapev assume bounded reward distributions with support $[0, b]$. We choose $b = \max_{i \in \cA} \mu_i + \sigma_i \sqrt{\log n}$, since this is a high-probability upper bound on the absolute value of $n$ independent observations from $\cN(\mu_i, \sigma_i^2)$. In \shadavar, we set $\delta = 0.05$, and thus our upper bounds on reward variances hold with probability $0.95$. In \vbr, $\gamma = 1.96$, which means that the mean arm rewards lie between their upper and lower bounds with probability $0.95$. \citet{faella20rapidly} showed that \vbr performs well with Gaussian noise when $\gamma \approx 2$. All reported results are averaged over $5\,000$ runs.

\gape and \gapev have $O(\exp[- c n / H])$ error bounds on the probability of misidentifying the best arm, where $n$ is the budget, $H$ is the complexity parameter, and $c = 1 / 144$ for \gape and $c = 1 / 512$ for \gapev. Our error bounds are $O(\exp[- c' n / H'])$, where $H'$ is a comparable complexity parameter and $c' = 1 / (4 \log_2 K)$. Even for moderate $K$, $c \ll c'$. Therefore, when \shvar and \shadavar are implemented as analyzed, they provide stronger guarantees on identifying the best arm than \gape and \gapev. To make the algorithms comparable, we set $H$ of \gape and \gapev to $H c / c'$, by increasing their confidence widths. Since $H$ is an input to both \gape and \gapev, note that they have an advantage over our algorithms that do not require it.

\subsection{Synthetic Experiments}
\label{sec:synthetic experiments}

Our first experiment is on a Gaussian bandit with $K$ arms. The mean reward of arm $i$ is $\mu_i = 1 - \sqrt{(i - 1) / K}$. We choose this setting because \sh is known to perform well in it. Specifically, note that the complexity parameter $H_2$ in \eqref{eq:h2} is minimized when $i / \Delta_i^2$ are equal for all $i \in \cA \setminus \set{1}$. For our $\mu_i$, $\Delta_i^2 = (i - 1) / K \approx i / K$ and thus $i / \Delta_i^2 \approx K$. We set the reward variance as $\sigma^2_i = 0.9 \mu^2_i + 0.1$ when arm $i$ is even and $\sigma^2_i = 0.1$ when arm $i$ is odd. We additionally perturb $\mu_i$ and $\sigma_i^2$ with additive $\cN(0, 0.05^2)$ and multiplicative $\mathrm{Unif}(0.5, 1.5)$ noise, respectively. We visualize the mean rewards $\mu_i$ and the corresponding variances $\sigma^2_i$, for $K = 64$ arms, in \cref{fig:synthetic}. The variances are chosen so that every stage of sequential halving involves both high-variance and low-variance arms. Therefore, an algorithm that adapts its budget allocation to the reward variances of the remaining arms eliminates the best arm with a lower probability than the algorithm that does not.

\begin{figure}[t]
  \centering
  \includegraphics[width=0.48\textwidth]{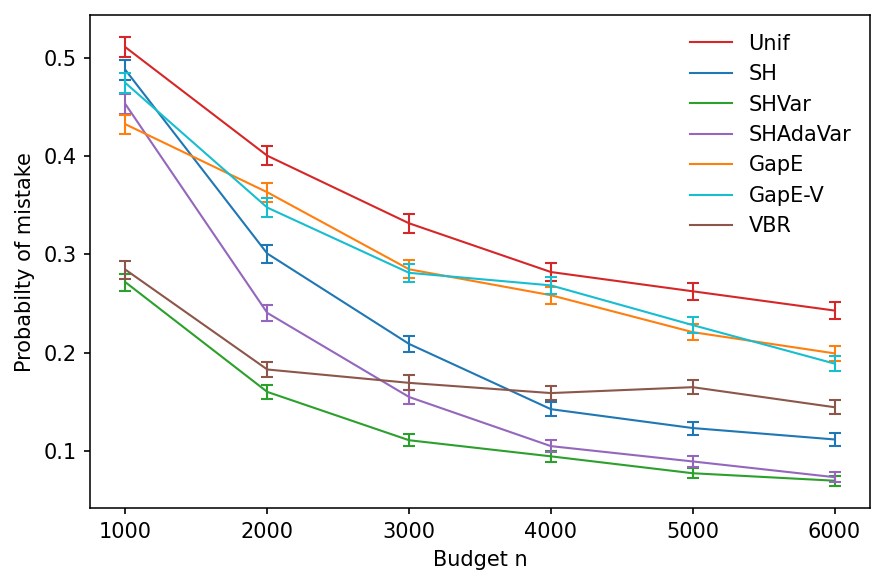}
  \caption{Probability of misidentifying the best arm in the Gaussian bandit in \cref{sec:synthetic experiments}, as budget $n$ increases. The number of arms is $K = 64$ and the results are averaged over $5\,000$ runs.}
  \label{fig:varying_n_K_64}
\end{figure}

In \cref{fig:varying_n_K_64}, we report the probability of misidentifying the best arm among $K = 64$ arms (\cref{fig:synthetic}) as budget $n$ increases. As expected, the naive algorithm \unif performs the worst. \gape and \gapev perform only slightly better. When the algorithms have comparable error guarantees to \shvar and \shadavar, their confidence intervals are too wide to be practical. \sh performs surprisingly well. As observed by \citet{karnin13almost} and confirmed by \citet{li18hyperband}, \sh is a superior algorithm in the fixed-budget setting because it aggressively eliminates a half of the remaining arms in each stage. Therefore, it outperforms \gape and \gapev. We note that \shvar outperforms all algorithms for all budgets $n$. For smaller budgets, \vbr outperforms \shadavar. However, as the budget $n$ increases, \shadavar outperforms \vbr; and without any additional information about the problem instance approaches the performance of \shvar, which knows the reward variances. This shows that our variance upper bounds improve quickly with larger budgets, as is expected based on the algebraic form in \eqref{eq:variance ucb}.

\begin{figure}[t]
  \centering
  \includegraphics[width=0.48\textwidth]{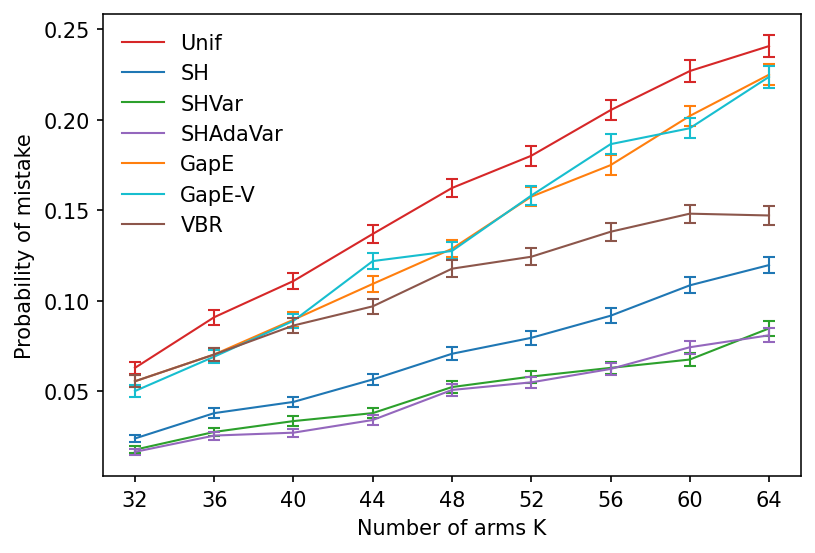}
  \caption{Probability of misidentifying the best arm in the Gaussian bandit in \cref{sec:synthetic experiments}, as the number of arms $K$ increases. The budget is fixed at $n = 5\,000$ and the results are averaged over $5\,000$ runs.}
  \label{fig:varying_K_n_4000_maxK_64}
\end{figure}

In the next experiment, we take same Gaussian bandit as in \cref{fig:varying_n_K_64}. The budget is fixed at $n = 5\,000$ and we vary the number of arms $K$ from $32$ to $64$. In \cref{fig:varying_K_n_4000_maxK_64}, we show the probability of misidentifying the best arm as the number of arms $K$ increases. We observe two major trends. First, the relative order of the algorithms, as measured by their probability of a mistake, is similar to \cref{fig:varying_n_K_64}. Second, all algorithms get worse as the number of arms $K$ increases because the problem instance becomes harder. This experiment shows that \shvar and \shadavar can perform well for a wide range of $K$, they have the lowest probabilities of a mistake for all $K$. While the other algorithms perform well at $K = 32$, their probability of a mistake is around $0.05$ or below; they perform poorly at $K = 64$, their probability of a mistake is above $0.1$.

\subsection{MovieLens Experiments}
\label{sec:movielens experiments}

Our next experiment is motivated by the A/B testing problem in \cref{sec:introduction}. The objective is to identify the movie with the highest mean rating from a pool of $K$ movies, where movies are arms and their ratings are rewards. The movies, users, and ratings are simulated using the MovieLens 1M dataset \citep{movielens}. This dataset contains one million ratings given by $6\,040$ users to $3\,952$ movies. We complete the missing ratings using low-rank matrix factorization with rank $5$, which is done using alternating least squares \citep{davenport16overview}. The result is a $6\,040 \times 3\,952$ matrix $M$, where $M_{i, j}$ is the estimated rating given by user $i$ to movie $j$.

\begin{figure}[t]
  \centering
  \includegraphics[width=0.48\textwidth]{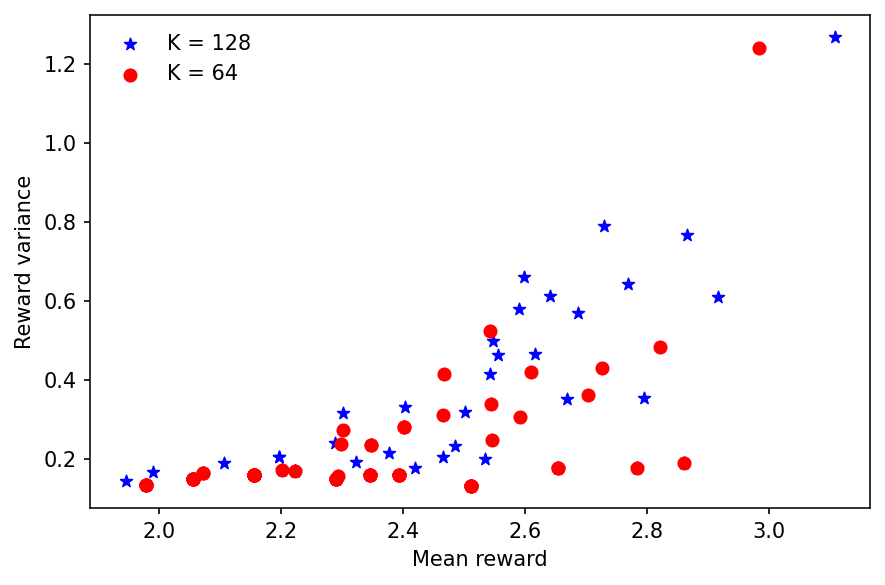}
  \caption{Means and variances of ratings of $K$ movies from the MovieLens dataset. A new sample is generated in each run of the experiment, as described in \cref{sec:movielens experiments}.}
  \label{fig:movielens}
\end{figure}

This experiment is averaged over $5\,000$ runs. In each run, we randomly choose new movies according to the following procedure. For all arms $i \in \cA$, we generate mean $\tilde{\mu}_i$ and variance $\tilde{\sigma}_i^2$ as described in \cref{sec:synthetic experiments}. Then, for each $i$, we find the closest movie in the MovieLens dataset with mean $\mu_i$ and variance $\sigma_i^2$, the movie that minimizes the distance $(\mu_i - \tilde{\mu}_i)^2 + (\sigma^2_i - \tilde{\sigma}_i^2)^2$. The means and variances of movie ratings from two runs are shown in \cref{fig:movielens}. As in \cref{sec:synthetic experiments}, the movies are selected so that sequential elimination with halving is expected to perform well. The variance of movie ratings in \cref{fig:movielens} is intrinsic to our domain: movies are often made for specific audiences and thus can have a huge variance in their ratings. For instance, a child may not like a horror movie, while a horror enthusiast would enjoy it. Because of this, an algorithm that adapts its budget allocation to the rating variances of the remaining movies can perform better. The last notable difference from \cref{sec:synthetic experiments} is that movie ratings are realistic. In particular, when arm $i$ is pulled, we choose a random user $j$ and return $M_{j, i}$ as its stochastic reward. Therefore, this experiment showcases the robustness of our algorithms beyond Gaussian noise.

\begin{figure}[t]
  \centering
  \includegraphics[width=0.48\textwidth]{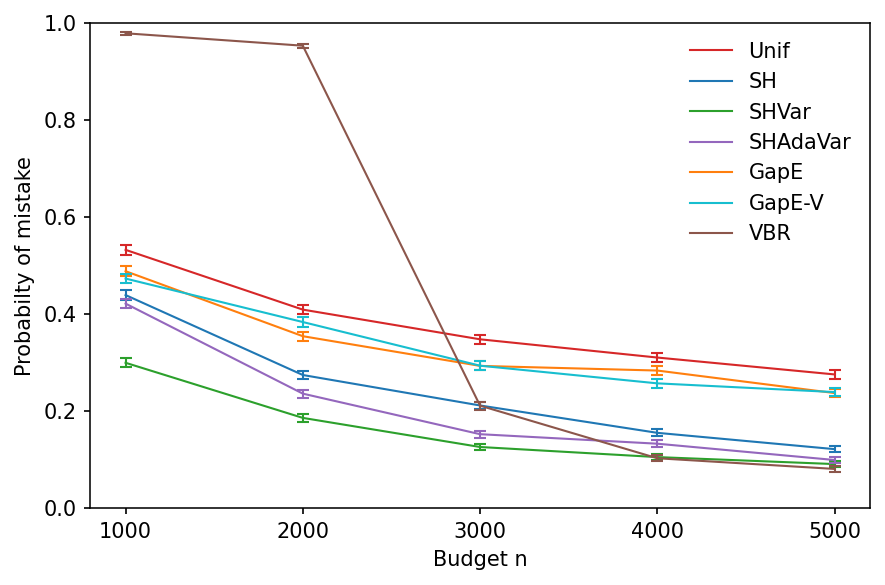}
  \caption{Probability of misidentifying the best movie in the MovieLens bandit in \cref{sec:movielens experiments}, as budget $n$ increases. The number of movies is $K = 64$ and the results are averaged over $5\,000$ runs.}
  \label{fig:varying_n_K_64_ML}
\end{figure}

In \cref{fig:varying_n_K_64_ML}, we report the probability of misidentifying the best movie from $K = 64$ as budget $n$ increases. \shvar and \shadavar perform the best for most budgets, although the reward distributions are not Gaussian. The relative performance of the algorithms is similar to \cref{sec:synthetic experiments}: \unif is the worst, and \gape and \gapev improve upon it. The only exception is \vbr: it performs poorly for smaller budgets, and on par with \shvar and \shadavar for larger budgets.

We increase the number of movies next. In \cref{fig:varying_n_K_128_ML}, we report the probability of misidentifying the best movie from $K = 128$ as budget $n$ increases. The trends are similar to $K = 64$, except that \vbr performs poorly for all budgets. This is because \vbr has $K$ stages and eliminates one arm per stage even when the number of observations is small. In comparison, our algorithms have $\log_2 K$ stages.

\begin{figure}[t]
  \centering
  \includegraphics[width=0.48\textwidth]{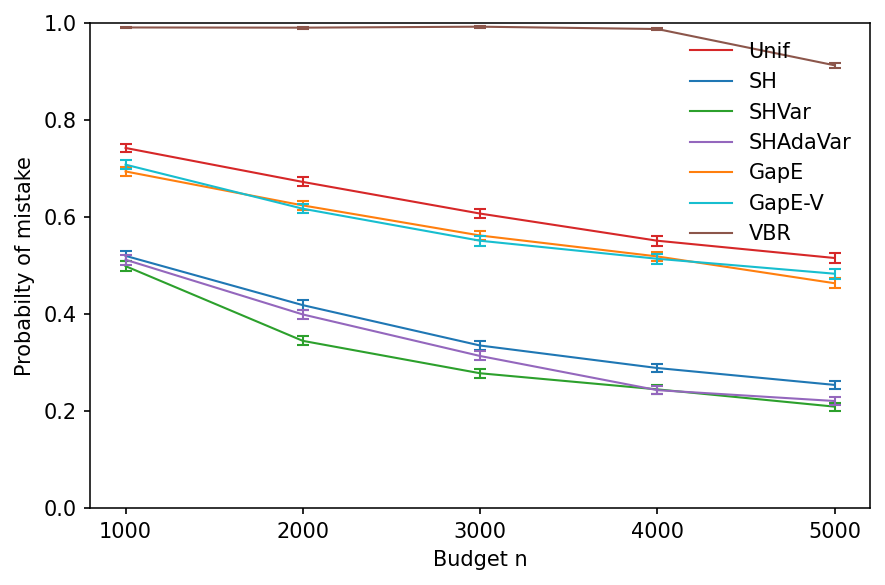}
  \caption{Probability of misidentifying the best movie in the MovieLens bandit in \cref{sec:movielens experiments}, as budget $n$ increases. The number of movies is $K = 128$ and the results are averaged over $5\,000$ runs.}
  \label{fig:varying_n_K_128_ML}
\end{figure}

\section{Related Work}
\label{sec:related work}

Best-arm identification has been studied extensively in both fixed-budget \citep{bubeck09pure,audibert10best} and fixed-confidence \citep{evendar06action} settings. The two closest prior works are \citet{gabillon11multibandit} and \citet{faella20rapidly}, both of which studied fixed-budget BAI with heterogeneous reward variances. All other works on BAI with heterogeneous reward variances are in the fixed-confidence setting \citep{lu21variancedependent,zhou22approximate,jourdan22dealing}.

The first work on variance-adaptive BAI was in the fixed-budget setting \citep{gabillon11multibandit}. This paper proposed algorithm \gapev and showed that its probability of mistake decreases exponentially with budget $n$. Our error bounds are comparable to \citet{gabillon11multibandit}. The main shortcoming of the analyses in \citet{gabillon11multibandit} is that they assume that the complexity parameter is known and used by \gapev. Since the complexity parameter depends on unknown gaps and reward variances, it is typically unknown in practice. To address this issue, \citet{gabillon11multibandit} introduced an adaptive variant of \gapev, \agapev, where the complexity parameter is estimated. This algorithm does not come with any guarantee.

The only other work that studied variance-adaptive fixed-budget BAI is \citet{faella20rapidly}. This paper proposed and analyzed a variant of successive rejects algorithm \citep{audibert10best}. Since \sh of \citet{karnin13almost} has a comparable error bound to successive rejects of \citet{audibert10best}, our variance-adaptive sequential halving algorithms have comparable error bounds to variance-adaptive successive rejects of \citet{faella20rapidly}. Roughly speaking, all bounds can be stated as $\exp[- n / H]$, where $H$ is a complexity parameter that depends on the number of arms $K$, their variances, and their gaps.

We propose variance-adaptive sequential halving for fixed-budget BAI. Our algorithms have state-of-the-art performance in our experiments (\cref{sec:experiments}). They are conceptually simpler than prior works \citep{gabillon11multibandit,faella20rapidly} and can be implemented as analyzed, unlike \citet{gabillon11multibandit}.

\section{Conclusions}
\label{sec:conclusions}

We study best-arm identification in the fixed-budget setting where the reward variances vary across the arms. We propose two variance-adaptive elimination algorithms for this problem: \shvar for known reward variances and \shadavar for unknown reward variances. Both algorithms proceed in stages and pull arms with higher reward variances more often than those with lower variances. While the design and analysis of \shvar are of interest, they are a stepping stone for \shadavar, which adapts to unknown reward variances. The novelty in \shadavar is in solving an optimal design problem with unknown observation variances. Its analysis relies on a novel lower bound on the number of arm pulls in BAI that does not require closed-form solutions to the budget allocation problem. Our numerical simulations show that \shvar and \shadavar are not only theoretically sound, but also competitive with state-of-the-art baselines.

Our work leaves open several questions of interest. First, the design of \shadavar is for Gaussian reward noise. The reason for this choice is that our initial experiments showed quick concentration and also robustness to noise misspecification. Concentration of general random variables with unknown variances can be analyzed using empirical Bernstein bounds \citep{maurer09empirical}. This approach was taken by \citet{gabillon11multibandit} and could also be applied in our setting. For now, to address the issue of Gaussian noise, we experiment with non-Gaussian noise in \cref{sec:movielens experiments}. Second, while our error bounds depend on all parameters of interest as expected, we do not provide a matching lower bound. When the reward variances are known, we believe that a lower bound can be proved by building on the work of \citet{carpentier16tight}. Finally, our algorithms are not contextual, which limits their application because many bandit problems are contextual \citep{li10contextual,wen15efficient,zong16cascading}.

\bibliography{References}

\clearpage
\onecolumn
\appendix

\section{Proof of Theorem \ref{thm:shvar}}
\label{sec:shvar proof}

First, we decompose the probability of choosing a suboptimal arm. For any $s \in [m]$, let $E_s = \set{1 \in \cA_{s + 1}}$ be the event that the best arm is not eliminated in stage $s$ and $\bar{E}_s$ be its complement. Then by the law of total probability,
\begin{align*}
  \prob{\hat{I} \neq 1}
  = \prob{\bar{E}_m}
  = \sum_{s = 1}^m \prob{\bar{E}_s, E_{s - 1} \dots, E_1}
  \leq \sum_{s = 1}^m \condprob{\bar{E}_s}{E_{s - 1} \dots, E_1}\,.
\end{align*}
We bound $\condprob{\bar{E}_s}{E_{s - 1} \dots, E_1}$ based on the observation that the best arm can be eliminated only if the estimated mean rewards of at least a half of the arms in $\cA_s$ are at least as high as that of the best arm. Specifically, let $\cA_s' = \cA_s \setminus \set{1}$ be the set of all arms in stage $s$ but the best arm and
\begin{align*}
  N_s'
  = \sum_{i \in \cA_s'} \I{\hat{\mu}_{s, i} \geq \hat{\mu}_{s, 1}}\,.
\end{align*}
Then by the Markov's inequality,
\begin{align*}
  \condprob{\bar{E}_s}{E_{s - 1} \dots, E_1}
  \leq \condprob{N_s' \geq \frac{n_s}{2}}{E_{s - 1} \dots, E_1}
  \leq \frac{2 \, \condE{N_s'}{E_{s - 1} \dots, E_1}}{n_s}\,.
\end{align*}
The key step in bounding the above expectation is understanding the probability that any arm has a higher estimated mean reward than the best one. We bound this probability next.

\begin{lemma}
\label{lem:arm error} For any stage $s \in [m]$ with the best arm, $1 \in \cA_s$, and any suboptimal arm $i \in \cA_s$, we have
\begin{align*}
  \prob{\hat{\mu}_{s, i} \geq \hat{\mu}_{s, 1}}
  \leq \exp\left[- \frac{n_s \Delta_i^2}{4 \sum_{j \in \cA_s} \sigma_j^2}\right]\,.
\end{align*}
\end{lemma}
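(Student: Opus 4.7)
The plan is to reduce the claim to a standard sub-Gaussian tail bound on the difference of two independent empirical means, after exploiting the exact allocation guaranteed by \cref{lem:shvar allocation}.

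First, I would invoke \cref{lem:shvar allocation} to replace the random pull counts with the ideal allocation: conditional on arm $1 \in \cA_s$, \shvar pulls every arm $j \in \cA_s$ exactly $N_{s,j} = \lambda_{s,j} = \sigma_j^2 n_s / \sum_{k \in \cA_s} \sigma_k^2$ times. The crucial consequence is that the variance of every mean estimate at the end of stage $s$ collapses to the common value
\begin{align*}
  \frac{\sigma_j^2}{N_{s,j}}
  = \frac{\sum_{k \in \cA_s} \sigma_k^2}{n_s}\,,
\end{align*}
which is the same algebraic identity already used in the proof of \cref{lem:shvar allocation}.

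Next, since the per-arm allocation is deterministic (it does not depend on the observed rewards, only on the known $\sigma_j^2$), $\hat{\mu}_{s,1}$ and $\hat{\mu}_{s,i}$ are independent averages of sub-Gaussian rewards. I would form the difference
\begin{align*}
  Z = \hat{\mu}_{s,1} - \hat{\mu}_{s,i}\,,
\end{align*}
which has mean $\Delta_i$ and is sub-Gaussian with variance proxy
\begin{align*}
  \frac{\sigma_1^2}{N_{s,1}} + \frac{\sigma_i^2}{N_{s,i}}
  = \frac{2 \sum_{j \in \cA_s} \sigma_j^2}{n_s}\,.
\end{align*}
A standard sub-Gaussian (equivalently, Gaussian since the analysis is under Gaussian rewards) tail bound then yields
\begin{align*}
  \prob{\hat{\mu}_{s,i} \geq \hat{\mu}_{s,1}}
  = \prob{Z \leq 0}
  = \prob{Z - \Delta_i \leq -\Delta_i}
  \leq \exp\!\left[-\frac{\Delta_i^2}{2 \cdot 2 \sum_{j \in \cA_s} \sigma_j^2 / n_s}\right]
  = \exp\!\left[-\frac{n_s \Delta_i^2}{4 \sum_{j \in \cA_s} \sigma_j^2}\right]\,,
\end{align*}
which is exactly the claimed bound.

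I do not anticipate a genuine obstacle here; the work is done by \cref{lem:shvar allocation}. The only subtlety to verify carefully is that the arm-pull schedule of \shvar within stage $s$ depends solely on the known variances $\sigma_j^2$ and the indices pulled so far, not on the observed rewards, so $N_{s,j}$ is deterministic and the rewards contributing to $\hat{\mu}_{s,1}$ and $\hat{\mu}_{s,i}$ are genuinely independent i.i.d.\ samples. Once that independence is stated, the rest is a one-line Gaussian (or Hoeffding-type sub-Gaussian) tail bound, and no further calculation is needed.
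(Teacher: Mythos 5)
Your proof is correct and follows essentially the same route as the paper: invoke \cref{lem:shvar allocation} to get $\sigma_j^2/N_{s,j} = \sum_{k \in \cA_s}\sigma_k^2/n_s$ for every arm, then apply a sub-Gaussian tail bound to the difference of the two mean estimates, whose variance proxy is $\sigma_1^2/N_{s,1} + \sigma_i^2/N_{s,i} = 2\sum_{j \in \cA_s}\sigma_j^2/n_s$. Your explicit remark that the within-stage pull schedule depends only on the known variances (so the counts are deterministic given $\cA_s$ and the estimates are independent) is a worthwhile detail the paper leaves implicit.
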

\begin{proof}
The proof is based on concentration inequalities for sub-Gaussian random variables \citep{boucheron13concentration}. In particular, since $\hat{\mu}_{s, i} - \mu_i$ and $\hat{\mu}_{s, 1} - \mu_1$ are sub-Gaussian with variance proxies $\sigma_i^2 / N_{s, i}$ and $\sigma_1^2 / N_{s, 1}$, respectively; their difference is sub-Gaussian with a variance proxy $\sigma_i^2 / N_{s, i} + \sigma_1^2 / N_{s, 1}$. It follows that
\begin{align*}
  \prob{\hat{\mu}_{s, i} \geq \hat{\mu}_{s, 1}}
  & = \prob{\hat{\mu}_{s, i} - \hat{\mu}_{s, 1} \geq 0}
  = \prob{(\hat{\mu}_{s, i} - \mu_i) - (\hat{\mu}_{s, 1} - \mu_1) > \Delta_i} \\
  & \leq \exp\left[- \frac{\Delta_i^2}
  {2 \left(\frac{\sigma_i^2}{N_{s, i}} + \frac{\sigma_1^2}{N_{s, 1}}\right)}\right]
  = \exp\left[- \frac{n_s \Delta_i^2}{4 \sum_{j \in \cA_s} \sigma_j^2}\right]\,,
\end{align*}
where the last step follows from the definitions of $N_{s, i}$ and $N_{s, 1}$ in \cref{lem:shvar allocation}.
\end{proof}

The last major step is bounding $\condE{N_s'}{E_{s - 1} \dots, E_1}$ with the help of \cref{lem:arm error}. Starting with the union bound, we get
\begin{align*}
  \condE{N_s'}{E_{s - 1} \dots, E_1}
  & \leq \sum_{i \in \cA_s'} \prob{\hat{\mu}_{s, i} \geq \hat{\mu}_{s, 1}}
  \leq \sum_{i \in \cA_s'}
  \exp\left[- \frac{n_s \Delta_i^2}{4 \sum_{j \in \cA_s} \sigma_j^2}\right] \\
  & \leq n_s \max_{i \in \cA_s'}
  \exp\left[- \frac{n_s \Delta_i^2}{4 \sum_{j \in \cA_s} \sigma_j^2}\right]
  = n_s \exp\left[- \frac{n_s \min_{i \in \cA_s'} \Delta_i^2}
  {4 \sum_{j \in \cA_s} \sigma_j^2}\right]\,.
\end{align*}
Now we chain all inequalities and get
\begin{align*}
  \prob{\hat{I} \neq 1}
  \leq 2 \sum_{s = 1}^m \exp\left[- \frac{n_s \min_{i \in \cA_s'} \Delta_i^2}
  {4 \sum_{j \in \cA_s} \sigma_j^2}\right]\,.
\end{align*}
To get the final claim, we use that
\begin{align*}
  m
  = \log_2 K\,, \quad
  n_s
  = \frac{n}{\log_2 K}\,, \quad
  \min_{i \in \cA_s'} \Delta_i^2
  \geq \Delta_{\min}^2\,, \quad
  \sum_{j \in \cA_s} \sigma_j^2
  \leq \sum_{j \in \cA} \sigma_j^2\,.
\end{align*}
This concludes the proof.

\section{Proof of Theorem \ref{thm:shvar2}}
\label{sec:shvar2 proof}

This proof has the same steps as that in \cref{sec:shvar proof}. The only difference is that $N_{s, i}$ and $N_{s, 1}$ in \cref{lem:arm error} are replaced with their lower bounds, based on the following lemma.

\begin{lemma}
\label{lem:shvar pulls} Fix stage $s$ and arm $i \in \cA_s$ in \shvar. Then
\begin{align*}
  N_{s, i}
  \geq \frac{\sigma_i^2}{\sigma_{\max}^2} \left(\frac{n_s}{\abs{\cA_s}} - 1\right)\,,
\end{align*}
where $\sigma_{\max} = \max_{i \in \cA} \sigma_i$ is the maximum reward noise and $n_s$ is the budget in stage $s$.
\end{lemma}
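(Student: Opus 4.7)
The plan is to exploit the greedy selection rule of \shvar by pivoting the argument around the most-pulled arm at the end of stage $s$ and the final round at which it was selected. Concretely, let $k^\star \in \cA_s$ denote an arm with the largest pull count, so that $k^\star \in \argmax_{k \in \cA_s} N_{s, k}$. Since $\sum_{k \in \cA_s} N_{s, k} = n_s$, the pigeonhole principle immediately yields the clean lower bound $N_{s, k^\star} \geq n_s / \abs{\cA_s}$. This is the only global information I will need about $k^\star$; everything else will be extracted from \shvar's greedy rule applied at $k^\star$'s last pull.

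Next, let $t^\star \in [n_s]$ be the last round in stage $s$ at which $k^\star$ was pulled, which exists provided $n_s \geq \abs{\cA_s}$ so that $N_{s, k^\star} \geq 1$. At round $t^\star$, the pull count satisfies $N_{s, t^\star, k^\star} = N_{s, k^\star} - 1$, and because $I_{s, t^\star} = k^\star$, the selection rule in \cref{alg:shvar} gives
\begin{equation*}
\frac{\sigma_{k^\star}^2}{N_{s, k^\star} - 1}
\;=\; \frac{\sigma_{k^\star}^2}{N_{s, t^\star, k^\star}}
\;\geq\; \frac{\sigma_j^2}{N_{s, t^\star, j}}
\qquad \text{for every } j \in \cA_s.
\end{equation*}
Since pull counts are monotone, $N_{s, t^\star, j} \leq N_{s, j}$, so $\sigma_j^2 / N_{s, t^\star, j} \geq \sigma_j^2 / N_{s, j}$, and rearranging produces the key inequality
\begin{equation*}
N_{s, j} \;\geq\; \frac{\sigma_j^2 \, (N_{s, k^\star} - 1)}{\sigma_{k^\star}^2}
\qquad \text{for every } j \in \cA_s.
\end{equation*}

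Combining this with the pigeonhole bound $N_{s, k^\star} - 1 \geq n_s / \abs{\cA_s} - 1$ and $\sigma_{k^\star}^2 \leq \sigma_{\max}^2$ yields the claimed inequality for every $j \in \cA_s$, including $j = k^\star$ (which also holds trivially via $\sigma_{k^\star}^2 / \sigma_{\max}^2 \leq 1$ and $N_{s, k^\star} \geq n_s / \abs{\cA_s}$). I do not expect any genuinely hard step here: the only subtlety to watch for is the two places where monotonicity and the ``last pull'' identity are used simultaneously to swap $N_{s, t^\star, j}$ for $N_{s, j}$ and $N_{s, t^\star, k^\star}$ for $N_{s, k^\star} - 1$. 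This design choice is precisely what lets the argument avoid the closed form used in \cref{lem:shvar allocation}, and it is the template that will transfer in \cref{sec:shadavar proof} once $\sigma_i^2$ is replaced by the data-dependent upper confidence bound $U_{s, t, i}$.
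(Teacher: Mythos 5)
Your proof is correct and follows essentially the same route as the paper's: both pivot on the most-pulled arm at the round of its last pull, apply the greedy selection rule there, and then combine the monotonicity of pull counts with the pigeonhole bound $N_{s,k^\star}\geq n_s/\abs{\cA_s}$ and $\sigma_{k^\star}\leq\sigma_{\max}$. The only differences are cosmetic (order of the rearrangement and the explicit remark about $j=k^\star$).
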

\begin{proof}
Let $J$ be the most pulled arm in stage $s$ and $\ell \in [n_s]$ be the round where arm $J$ is pulled the last time. By the design of \shvar, since arm $J$ is pulled in round $\ell$,
\begin{align*}
  \frac{\sigma_J^2}{N_{s, \ell, J}}
  \geq \frac{\sigma_i^2}{N_{s, \ell, i}}
\end{align*}
holds for any arm $i \in \cA_s$. This can be further rearranged as
\begin{align*}
  N_{s, \ell, i}
  \geq \frac{\sigma_i^2}{\sigma_J^2} N_{s, \ell, J}\,.
\end{align*}
Since arm $J$ is the most pulled arm in stage $s$ and $\ell$ is the round of its last pull,
\begin{align*}
  N_{s, \ell, J}
  = N_{s, J} - 1
  \geq \frac{n_s}{\abs{\cA_s}} - 1\,.
\end{align*}
Moreover, $N_{s, i} \geq N_{s, \ell, i}$. Now we combine all inequalities and get
\begin{align}
  N_{s, i}
  \geq \frac{\sigma_i^2}{\sigma_J^2} \left(\frac{n_s}{\abs{\cA_s}} - 1\right)\,.
  \label{eq:shvar pull lower bound}
\end{align}
To eliminate dependence on random $J$, we use $\sigma_J \leq \sigma_{\max}$. This concludes the proof.
\end{proof}

When plugged into \cref{lem:arm error}, we get
\begin{align*}
  \prob{\hat{\mu}_{s, i} \geq \hat{\mu}_{s, 1}}
  \leq \exp\left[- \frac{\Delta_i^2}
  {2 \left(\frac{\sigma_i^2}{N_{s, i}} + \frac{\sigma_1^2}{N_{s, 1}}\right)}\right]
  \leq \exp\left[- \frac{\left(\frac{n_s}{\abs{\cA_s}} - 1\right) \Delta_i^2}
  {4 \sigma_{\max}^2}\right]\,.
\end{align*}
This completes the proof.

\section{Proof of Theorem \ref{thm:shadavar}}
\label{sec:shadavar proof}

This proof has the same steps as that in \cref{sec:shvar proof}. The main difference is that $N_{s, i}$ and $N_{s, 1}$ in \cref{lem:arm error} are replaced with their lower bounds, based on the following lemma.

\begin{lemma}
\label{lem:shadavar pulls} Fix stage $s$ and arm $i \in \cA_s$ in \shadavar. Then
\begin{align*}
  N_{s, i}
  \geq \frac{\sigma_i^2}{\sigma_{\max}^2} \alpha(\abs{\cA_s}, n_s, \delta)
  \left(\frac{n_s}{\abs{\cA_s}} - 1\right)\,,
\end{align*}
where $\sigma_{\max} = \max_{i \in \cA} \sigma_i$ is the maximum reward noise, $n_s$ is the budget in stage $s$, and
\begin{align*}
  \alpha(k, n, \delta)
  = \frac{1 - 2 \sqrt{\frac{\log(1 / \delta)}{n / k - 2}}}
  {1 + 2 \sqrt{\frac{\log(1 / \delta)}{n / k - 2}} +
  \frac{2 \log(1 / \delta)}{n / k - 2}}
\end{align*}
is an arm-independent constant.
\end{lemma}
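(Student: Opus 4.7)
The plan is to mimic the proof of \cref{lem:shvar pulls}, replacing the exact comparison between the ratios $\sigma_J^2 / N_{s, \ell, J}$ and $\sigma_i^2 / N_{s, \ell, i}$ by an approximate one whose slack is exactly captured by $\alpha(\abs{\cA_s}, n_s, \delta)$. Following that template, let $J \in \cA_s$ be the most-pulled arm in stage $s$ and let $\ell$ be the round of its last pull. Because \shadavar first performs a round-robin of length $\abs{\cA_s}(4 \log(1/\delta) + 1)$ and any extra pulls are scheduled by the UCB rule, the budget hypothesis of \cref{thm:shadavar} guarantees that $\ell$ lies in the UCB phase; by pigeonhole we also have $N_{s, \ell, J} = N_{s, J} - 1 \geq n_s / \abs{\cA_s} - 1$.

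Since arm $J$ is selected in round $\ell$ by the $\argmax$ rule of \cref{alg:shadavar},
\[
  \frac{U_{s, \ell, J}}{N_{s, \ell, J}} \geq \frac{U_{s, \ell, i}}{N_{s, \ell, i}}
\]
for every $i \in \cA_s$. I would then invoke both parts of \cref{lem:concentration} to sandwich $U_{s, \ell, \cdot}$ between the true variances. The first part gives $U_{s, \ell, i} \geq \sigma_i^2$, so the right-hand side of the display is at least $\sigma_i^2 / N_{s, \ell, i}$. The second part, together with the definition of $U_{s, \ell, J}$ in \eqref{eq:variance ucb} and the lower bound $N = N_{s, \ell, J} - 1 \geq n_s / \abs{\cA_s} - 2$, yields $U_{s, \ell, J} \leq \sigma_J^2 / \alpha(\abs{\cA_s}, n_s, \delta)$. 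A union bound over all stages, rounds, and arms (absorbing a $Kn$ factor into $\delta$) ensures that both concentration statements hold simultaneously at the desired confidence level.

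Chaining the inequalities yields $\sigma_J^2 / (\alpha \, N_{s, \ell, J}) \geq \sigma_i^2 / N_{s, \ell, i}$, which rearranges to
\[
  N_{s, i} \geq N_{s, \ell, i}
  \geq \alpha(\abs{\cA_s}, n_s, \delta) \, \frac{\sigma_i^2}{\sigma_J^2} N_{s, \ell, J}
  \geq \frac{\sigma_i^2}{\sigma_{\max}^2} \, \alpha(\abs{\cA_s}, n_s, \delta)
  \left(\frac{n_s}{\abs{\cA_s}} - 1\right),
\]
where the last step applies $\sigma_J \leq \sigma_{\max}$ to eliminate the random identity of $J$, exactly as in the conclusion of \cref{lem:shvar pulls}.

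The main obstacle is the bookkeeping around the concentration step. The denominator $N = N_{s, \ell, J} - 1$ appearing inside $U_{s, \ell, J}$ is itself random, so I must uniformly lower bound it across all possible $\ell$ via the deterministic pigeonhole estimate $n_s / \abs{\cA_s} - 2$ in order to obtain the deterministic expression for $\alpha$, and I must check that this lower bound exceeds $4 \log(1/\delta)$ under the budget assumption so that \cref{lem:concentration} is applicable. Once this is in place, the rest is a routine repeat of the calculation in \cref{lem:shvar pulls}, with the algebraic form of $\alpha$ emerging directly from dividing the upper bound on $\hat\sigma_{s, \ell, J}^2$ by the shrinkage factor $1 - 2 \sqrt{\log(1/\delta)/N}$ in the definition of $U$.
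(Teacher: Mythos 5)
Your proposal matches the paper's proof essentially step for step: the same most-pulled arm $J$ and last-pull round $\ell$, the same $\argmax$ comparison of $U_{s,\ell,J}/N_{s,\ell,J}$ against $U_{s,\ell,i}/N_{s,\ell,i}$, the same two-sided application of \cref{lem:concentration} to sandwich the UCBs, and the same elimination of $J$ via $\sigma_J \leq \sigma_{\max}$ and $N_{s,\ell,J} \geq n_s/\abs{\cA_s} - 1$. Your added remark that the budget hypothesis forces $\ell$ into the UCB phase is a small point the paper leaves implicit, but otherwise this is the same argument.
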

\begin{proof}
Let $J$ be the most pulled arm in stage $s$ and $\ell \in [n_s]$ be the round where arm $J$ is pulled the last time. By the design of \shadavar, since arm $J$ is pulled in round $\ell$,
\begin{align*}
  \frac{U_{s, \ell, J}}{N_{s, \ell, J}}
  \geq \frac{U_{s, \ell, i}}{N_{s, \ell, i}}
\end{align*}
holds for any arm $i \in \cA_s$. Analogously to \eqref{eq:shvar pull lower bound}, this inequality can be rearranged and loosened as
\begin{align}
  N_{s, i}
  \geq \frac{U_{s, \ell, i}}{U_{s, \ell, J}} \left(\frac{n_s}{\abs{\cA_s}} - 1\right)\,.
  \label{eq:shadavar pull lower bound}
\end{align}
We bound $U_{s, \ell, i}$ from below using the fact that $U_{s, \ell, i} \geq \sigma_i^2$ holds with probability at least $1 - \delta$, based on the first claim in \cref{lem:concentration}. To bound $U_{s, \ell, J}$, we apply the second claim in \cref{lem:concentration} to bound $\hat{\sigma}_{s, \ell, J}^2$ in $U_{s, \ell, J}$, and get that
\begin{align*}
  U_{s, \ell, J}
  \leq \sigma_J^2 \frac{1 + 2 \sqrt{\frac{\log(1 / \delta)}{N_{s, \ell, J} - 1}} +
  \frac{2 \log(1 / \delta)}{N_{s, \ell, J} - 1}}
  {1 - 2 \sqrt{\frac{\log(1 / \delta)}{N_{s, \ell, J} - 1}}}
\end{align*}
holds with probability at least $1 - \delta$. Finally, we plug both bounds into \eqref{eq:shadavar pull lower bound} and get
\begin{align*}
  N_{s, i}
  \geq \frac{\sigma_i^2}{\sigma_J^2}
  \frac{1 - 2 \sqrt{\frac{\log(1 / \delta)}{N_{s, \ell, J} - 1}}}
  {1 + 2 \sqrt{\frac{\log(1 / \delta)}{N_{s, \ell, J} - 1}} +
  \frac{2 \log(1 / \delta)}{N_{s, \ell, J} - 1}} \left(\frac{n_s}{\abs{\cA_s}} - 1\right)\,.
\end{align*}
To eliminate dependence on random $J$, we use that $\sigma_J \leq \sigma_{\max}$ and $N_{s, \ell, J} \geq n_s / \abs{\cA_s} - 1$. This yields our claim and concludes the proof of \cref{lem:shadavar pulls}.
\end{proof}

Similarly to \cref{lem:shvar pulls}, this bound is asymptotically tight when all reward variances are identical. Also $\alpha(\abs{\cA_s}, n_s, \delta) \to 1$ as $n_s \to \infty$. Therefore, the bound has the same shape as that in \cref{lem:shvar pulls}.

The application of \cref{lem:shadavar pulls} requires more care. Specifically, it relies on high-probability confidence intervals derived in \cref{lem:concentration}, which need $N_{s, t, i} > 4 \log(1 / \delta) + 1$. This is guaranteed whenever $n \geq K \log_2 K (4 \log(1 / \delta) + 1)$. Moreover, since the confidence intervals need to hold in any stage $s$ and round $t$, and for any arm $i$, we need a union bound over $K n$ events. This leads to the following claim.

Suppose that $n \geq K \log_2 K (4 \log(1 / \delta) + 1)$. Then, when \cref{lem:shadavar pulls} is plugged into \cref{lem:arm error}, we get that
\begin{align*}
  \prob{\hat{\mu}_{s, i} \geq \hat{\mu}_{s, 1}}
  \leq \exp\left[- \frac{\Delta_i^2}
  {2 \left(\frac{\sigma_i^2}{N_{s, i}} + \frac{\sigma_1^2}{N_{s, 1}}\right)}\right]
  \leq \exp\left[- \frac{\alpha(\abs{\cA_s}, n_s, K n \delta)
  \left(\frac{n_s}{\abs{\cA_s}} - 1\right) \Delta_i^2}
  {4 \sigma_{\max}^2}\right]\,.
\end{align*}
This completes the proof.

\end{document}